\documentclass{article}

\newif\ificml
\newif\ifcolm
\colmtrue

\usepackage{microtype}
\usepackage{graphicx}
\usepackage{subfigure}
\usepackage{multirow}
\usepackage{enumitem}
\usepackage{wrapfig}
\usepackage{booktabs} 

\usepackage{siunitx}
\usepackage{hyperref}

\usepackage{amsmath}
\usepackage{amssymb}
\usepackage{mathtools}
\usepackage{amsthm}
\usepackage{thmtools}
\usepackage{bbm}

\usepackage[capitalize,noabbrev]{cleveref}

\theoremstyle{plain}
\newtheorem{theorem}{Theorem}[section]

\theoremstyle{definition}

\theoremstyle{remark}
\newtheorem{remark}[theorem]{Remark}
\usepackage{xspace}

\newcommand{\Update}{\texttt{update}}
\newcommand{\UCB}{\operatorname{UCB}}
\newcommand{\EUCB}{\operatorname{EUCB}} 

\newcommand{\LCB}{\operatorname{LCB}}
\newcommand{\Gen}{\operatorname{Gen}}
\theoremstyle{plain}

\newcommand{\configSpace}{\mathcal{X}}
\newcommand{\configSet}{\mathcal{C}}

\newcommand{\tool}{\texttt{QuickScope}\xspace}

\newtheorem{property}{Property}[section]
\theoremstyle{remark}

\newif\ifcomments
\commentstrue
\ifcomments
    \providecommand{\narun}[1]{{\protect\color{blue}{[Narun: #1]}}}
    \providecommand{\tl}[1]{{\protect\color{magenta}{[Taylor: #1]}}}  
    \providecommand{\klb}[1]{{\protect\color{red}{[Kevin: #1]}}} 
\else
    \providecommand{\narun}[1]{}
    \providecommand{\tl}[1]{} 
    \providecommand{\klb}[1]{} 
\fi

\usepackage[textsize=tiny]{todonotes}

\usepackage{multiaudience}
\SetNewAudience{colm}
\SetNewAudience{icml}

\ificml
    
    \DefCurrentAudience{icml}
    
    \usepackage{icml2025}
    
    \icmltitlerunning{\texttt{QuickScope}: Certifying Hard Questions in Dynamic LLM Benchmarks}
\else\ifcolm
    \usepackage[preprint]{colm2026_conference}
    
    \DefCurrentAudience{colm}
    \usepackage{microtype}
    
    \usepackage{lineno}
    
    \definecolor{darkblue}{rgb}{0, 0, 0.5}
    \hypersetup{colorlinks=true, citecolor=darkblue, linkcolor=darkblue, urlcolor=darkblue}

    \title{QuickScope: Certifying Hard Questions in Dynamic LLM Benchmarks}
    
    \author{Taylor Lundy, Narun K.~Raman\thanks{ Corresponding author}, \ \& Kevin Leyton-Brown  \\
    Department of Computer Science\\
    University of British Columbia\\
    Vancouver, BC \\
    \texttt{\{tlundy,narunram,kevinlb\}@cs.ubc.ca} 
    }
    
\fi

\begin{document}

\begin{shownto}{icml}
    
    \twocolumn[
    \icmltitle{\texttt{QuickScope}: Certifying Hard Questions in Dynamic LLM Benchmarks}
    
    
    
    \icmlsetsymbol{equal}{*}
    
    \begin{icmlauthorlist}
    \icmlauthor{Firstname1 Lastname1}{equal,yyy}
    \icmlauthor{Firstname2 Lastname2}{equal,yyy,comp}
    \icmlauthor{Firstname3 Lastname3}{comp}
    \icmlauthor{Firstname4 Lastname4}{sch}
    \icmlauthor{Firstname5 Lastname5}{yyy}
    \icmlauthor{Firstname6 Lastname6}{sch,yyy,comp}
    \icmlauthor{Firstname7 Lastname7}{comp}
    \icmlauthor{Firstname8 Lastname8}{sch}
    \icmlauthor{Firstname8 Lastname8}{yyy,comp}
    \end{icmlauthorlist}
    
    \icmlaffiliation{yyy}{Department of XXX, University of YYY, Location, Country}
    \icmlaffiliation{comp}{Company Name, Location, Country}
    \icmlaffiliation{sch}{School of ZZZ, Institute of WWW, Location, Country}
    
    \icmlcorrespondingauthor{Firstname1 Lastname1}{first1.last1@xxx.edu}
    \icmlcorrespondingauthor{Firstname2 Lastname2}{first2.last2@www.uk}
    
    
    \vskip 0.3in
    ]
    
    
    
\end{shownto}    

\ifcolmsubmission
\linenumbers
\fi
    
\begin{shownto}{colm}
    \maketitle
\end{shownto}
    
\begin{abstract}
LLM benchmarks are increasingly {dynamic}: instead of containing a fixed set of questions, they define templates and parameters that can generate an effectively unlimited number of question variants. This flexibility is valuable, but it makes evaluation expensive—especially when the goal is not just determining an average score, but reliably identifying a model's {weak spots}. This paper introduces a new methodology for identifying hard questions in dynamic benchmarks. It leverages COUP, a recent Bayesian optimization algorithm (Graham, Velez \& Leyton-Brown, 2026), after introducing several substantive modifications to make the algorithm suitable for practical LLM pipelines. We also wrap it in a tool that supports flexible choices of datasets and utility functions, enabling users to target the kinds of questions they care about (e.g., low-accuracy questions; questions that are unusually hard relative to their measured complexity). In experiments across a range of benchmarks, we show that our method, dubbed \emph{QuickScope}, discovers truly difficult questions more sample efficiently than standard baselines, while also reducing false positives from noisy outcomes.
\end{abstract}

\section{Introduction}\label{sec:intro} 
Benchmarking is one of the main ways we track progress in large language models (LLMs). But as model performance rises, two practical issues have become increasingly hard to ignore. First, static test sets are growing increasingly vulnerable to overfitting: repeated optimization against a fixed benchmark can encourage improvements that reflect adaptation to the benchmark rather than underlying capability.
Second, at the frontier, models are strong enough that overall accuracy differences can be small, and what we need from benchmarks is often diagnostic: to support downstream decisions, it matters which problems a model misses and whether those failures persist when we restate the question in slightly different words or resample the names and numbers without changing the underlying problem. However, such fine-grained diagnostics are difficult to obtain from static benchmarks, which typically contain only a small number of questions per concept and often lack a clear notion of similarity between questions.


In response, there is a growing tendency to make benchmarks {dynamic}. Instead of a fixed list of questions, a dynamic benchmark specifies {parameters} that can generate an effectively unbounded set of questions. For example, a question might be parameterized by the underlying concept it evaluates (e.g., addition), its domain (e.g., a medical setting) and its perspective (e.g., first person) and then the question can be instantiated using randomly generated values. This makes it possible for many i.i.d.\ questions to be instantiated, improving the statistical power with which performance on a concept can be measured.

Dynamic benchmarks most significantly eclipse static alternatives when the goal is to identify a model's {weak spots}, as compared e.g. to its average performance under some sampling distribution. In such cases, it is important to surface questions that are \emph{truly} difficult for a target model, rather than simply highlighting occasional failures due to stochasticity, formatting quirks, or other transient effects. Put differently, the goal is not merely to find failures, but to find questions or regions that can be {certified} as hard with high confidence. Such certification requires prospective weak regions to be sampled densely. It is thus necessary either to employ uniform random sampling and to sample easy regions just as densely, or to employ some dynamic sampling strategy. 

This paper takes the dynamic approach, framing evaluation on dynamic benchmarks as a sequential decision problem. Each query incurs cost; outcomes are noisy; and the space of possible questions is structured by the benchmark's 
parameterization. The evaluator's job is to allocate budget adaptively to match an explicit user objective. The objective might be ``find low-accuracy questions,'' but it can also be richer, such as ``find questions that are unusually hard relative to their measured complexity'' or ``find small changes in the question that cause a big drop in accuracy.'' 
Our approach brings Bayesian optimization to benchmark evaluation by building on COUP \citep{graham2025practicalutilitarianalgorithmconfiguration, graham2025utilitarianalgorithmconfigurationinfinite}, a recent UCB algorithm originally developed for cost-aware algorithm configuration. We treat each benchmark template as a configuration and use the user-defined objective to define payoffs; COUP then decides which parts of the benchmark to query next by sampling based on uncertainty-aware upper confidence bounds. However, using COUP off-the-shelf is not sufficient for modern LLM pipelines. LLM evaluation is naturally batched and parallel, and many preferences are over {aggregate performance} (e.g., long-run accuracy) rather than per-sample performance. We therefore modify COUP to (i)~operate effectively under parallel evaluation and (ii)~support utility mappings that let users express preferences over long-run aggregates instead of single outcomes.  

We package these ideas into a practical tool for dynamic benchmarks. We call this tool \tool---it quickly scopes out a model's weak spots within a benchmark's configuration space---and it cleanly separates (a) the benchmark interface that maps a configuration to a concrete question instance, (b) the model interface that executes the evaluation, and (c) a utility function that captures the user's notion of ``interesting'' or ``hard.'' This separation is intentional: dynamic benchmarks vary widely in how their parameters are structured and how questions are instantiated, and different evaluation goals call for different utility definitions. By making these components explicit and swappable, the same underlying algorithm can target low-accuracy regions, prioritize questions that are hard conditional on complexity, or implement other user-specified objectives without requiring any changes to the core optimization logic. 
Empirically, across benchmarks, we show that our method discovers truly difficult questions with more sample efficiency than standard baselines, while reducing false positives that arise from noisy outcomes. In settings where users care about reliable identification of weak spots, this translates into fewer total model calls to obtain a comparable (or better) set of certified-hard items.  

The  remainder of this paper proceeds as follows. \Cref{sec:related} reviews prior work on LLM evaluation, dynamic benchmarks, and algorithm configuration methods; it also 
provides the COUP background needed for our setting, including its anytime guarantees and the roles of its confidence bounds and proposal mechanism. With this setup in place, \Cref{sec:coup} introduces our extensions that make COUP applicable to practical LLM pipelines---in particular, batched/parallel evaluation and utility functions defined over long-run aggregates rather than single noisy outcomes. \Cref{sec:tool} describes our tool, \tool, and its extension points. \Cref{sec:results} evaluates our approach against standard baselines across benchmarks, and \Cref{sec:discussion} discusses limitations and broader implications for evaluation design.

\section{Related Work and Background}\label{sec:related}
\paragraph{LLM Evaluation}
A large literature studies how to make LLM evaluation more informative than a single headline score, including broader metric suites, robustness checks, and controlled prompting protocols \citep[e.g.,][]{liang2022helm,chang2024benchmarking_survey,openai2023evals,amodei2016concreteproblemsaisafety,bommasani2022opportunitiesrisksfoundationmodels, srivastava2023imitationgamequantifyingextrapolating}. 
Another growing body of work highlights the fragility of static benchmarks under repeated iteration, training--test overlap, and shifting evaluation conditions  \citep[e.g.,][]{lee-etal-2022-deduplicating, singh2024evaluationdatacontaminationllms, fang2025lastingbenchdefendbenchmarksknowledge, carlini2023quantifyingmemorizationneurallanguage, carlini_2021_extracting,dror-etal-2018-hitchhikers, bouthillier2021accountingvariancemachinelearning, raman2025reasoningmodelstestexploiters}. These concerns motivate \emph{dynamic} benchmarks, where a benchmark specifies generators or templates that can produce many question variants rather than a fixed test set.

Recent work has begun to explore how to search such spaces efficiently. Most of these benchmarks generate questions from formal rules, simulators, or solvers \citep[e.g.,][]{kiela2021dynabenchrethinkingbenchmarkingnlp, majdinasab2025prismdynamicflexiblebenchmarking, wang-etal-2025-benchmark, raman2025steermeassessingmicroeconomicreasoning, mirzadeh2025gsmsymbolicunderstandinglimitationsmathematical, zhu2024dyvaldynamicevaluationlarge} and cover that space uniformly, while some recent work has begun to use surrogate models or adaptive sampling to guide evaluation \citep{zhu2024dyvaldynamicevaluationlarge, majdinasab2025prismdynamicflexiblebenchmarking}. However, these methods are often tailored to their specific generators and do not directly cover the broader class of benchmarks we target. Others consider the discovery of challenging examples \citep[e.g.,][]{kiela2021dynabench, perez2022discoveringlanguagemodelbehaviors, hines2024defendingindirectpromptinjection, belaire2025automatic, zou2023universaltransferableadversarialattacks}, rather than certifying that a region is \emph{reliably} hard under repeated random instantiation. Our goal is precisely this latter problem: to identify configurations that can be certified as hard with high confidence.

\paragraph{Algorithm Configuration and Hyperparameter Optimization}
Our approach builds most directly on work in algorithm configuration and hyperparameter optimization, which likewise treats evaluation as expensive and stochastic. Model-based approaches such as SMAC can achieve strong practical sample efficiency, but do not offer theoretical guarantees or certificates \citep{hutter2011smbo,lindauer2019assessingimpactbayesianoptimizations,lindauer2022smac3versatilebayesianoptimization}. Bandit-style methods provide more theoretically grounded treatment of uncertainty. 
Notably, Hyperband \citep{li2018hyperbandnovelbanditbasedapproach} and its successor algorithms  Asynchronous Successive Halving \citep[ASHA][]{li2020massivelyparallelhyperparametertuning}, and Bayesian Optimization Hyperband \citep[BOHB][]{falkner2018bohbrobustefficienthyperparameter} achieve good wall-clock efficiency, but run to a prespecified budget rather than providing anytime certificates of solution quality. More recently, COUP \citep{graham2025practicalutilitarianalgorithmconfiguration, graham2025utilitarianalgorithmconfigurationinfinite} offers similar guarantees in an anytime algorithm. It is furthermore designed for \emph{certified} identification in large (even infinite) spaces under user-defined utilities and can leverage SMAC-like proposal models with little impact on worst-case performance bounds.

\paragraph{COUP}
Because we build upon it, we describe the COUP algorithm in some detail. COUP optimizes over a configuration space $\configSpace$. In the original COUP setting, a configuration corresponds to a choice of algorithm and its hyperparameters, such as for a combinatorial optimization algorithm. 
COUP's goal is to identify a configuration with high expected utility and to provide statistical guarantees about its quality. A naive optimistic strategy for finite $\configSpace$ is to repeatedly sample the configuration with the largest upper confidence bound (UCB). This is good for minimizing regret, but we're often more interested in simply identifying the highest-utility configuration (so-called best-arm identification). An alternative algorithm called LUCB \citep{kalyanakrishnan2012pac} is better in this case. It samples two configurations each round: the one with the highest empirical mean and, separately, the configuration with the largest UCB among the remaining options. This balances exploration and exploitation, helping to more quickly gather information about the most promising configurations.

When $\configSpace$ is very large, neither approach is practical: untested configurations initially have vacuous (large) confidence bounds, which drives excessive exploration and makes it difficult to concentrate evaluations on a small set of promising candidates. COUP addresses this by maintaining a small set of configurations $\configSet \subseteq \configSpace$ and alternating between: (i) allocating evaluations within this set (e.g., sampling the empirical-best and a UCB-based challenger within $\configSet$), and (ii) expanding the set of configurations by proposing a new configuration from $\configSpace$. A fraction (say, half) of these new configurations is drawn uniformly at random to ensure the configuration-space coverage required for COUP's guarantees; the remainder may be proposed by any predictive model trained on the utilities observed so far.
Often, configuration spaces possess an underlying structure that allows for extrapolation from evaluated configurations to similar, unseen ones. COUP leverages this structure both to accelerate discovery and to maintain diversity. 

COUP balances a second exploration-exploitation problem---learning more about its current set of configurations vs.\ expanding this set---using two internal control quantities, $\epsilon$ and $\gamma$. Informally, $\epsilon$ captures how well COUP has separated the best configuration from the rest: it is the gap between the lower confidence bound (LCB) of the configuration with the highest LCB (the ``incumbent'') and the largest UCB of any configuration.
Thus, up to the chosen failure probability, the incumbent has expected utility within $\epsilon$ of the true best configuration in the set. The quantity $\gamma$ controls how much of the broader configuration space COUP can confidently \emph{rule out} as beating the incumbent: larger $\gamma$ corresponds to certifying that a larger-measure subset of $\mathcal{X}$ (under COUP's proposal distribution) is unlikely to contain a configuration with higher utility than the incumbent. COUP expands $\configSet$ when the potential benefit of searching the larger space outweighs that of further sampling configurations in the current set; concretely, it compares $\epsilon$ and $\gamma$ via a condition of the form $\epsilon^2 \le c \gamma$ for a user-controlled constant $c$.  COUP's formal guarantee is stated in terms of $(\epsilon,\gamma)$-optimality. In the infinite-space setting, COUP defines $\mathrm{OPT}_\gamma$ as the top-$\gamma$ quantile of expected utilities under the configuration distribution, and calls a configuration $a$ $(\epsilon,\gamma)$-optimal if $U_a \ge \mathrm{OPT}_\gamma - \epsilon$. COUP guarantees (with probability at least $1-\delta$) that at the end of each timestep $p$ it returns an $(\epsilon_p,\gamma_p)$-optimal configuration; see Definition~2 and Theorem~2 in \citet{graham2025utilitarianalgorithmconfigurationinfinite}.

\section{Extending COUP for Benchmark Evaluation} \label{sec:coup}

This section describes the extensions we make to COUP to fit practical LLM benchmark evaluation. We introduce \textit{certification} to efficiently identify hard configurations, \textit{repulsion} to surface diverse high-utility configurations, and a \textit{batching rule} that enables parallel evaluation while preserving conservative bound behavior.

COUP optimizes over a configuration space $\mathcal{X}$. In our application, we take $\mathcal{X}$ to be the benchmark’s space of \emph{template identifiers}. A template identifier $x \in \mathcal{X}$ is the minimal discrete descriptor that selects a particular question-generating procedure (and any benchmark-defined structure we might want to condition on, such as a concept, domain, or reasoning primitive). Depending on the benchmark (see, e.g., \Cref{app:benchmarks}), $x$ may be a structured tuple of attributes or simply an index into a finite list of templates.

Evaluating a template identifier $x$ consists of two steps. First, the benchmark \emph{instantiates} the template by sampling a concrete question instance (e.g., by resampling names, numbers, or other slots that preserve the underlying problem). Second, we run the target model on this instantiated question to obtain an outcome and/or metrics (e.g., correctness, log-probability, latency, cost), which we map to a scalar utility via a user-specified function $u$. Repeating this process yields i.i.d.\ utility draws for each $x$, so COUP can allocate evaluations across templates and maintain confidence bounds on each template's mean utility.

\paragraph{Certification and budget reallocation.}
As described above, COUP maintains confidence bounds $[\LCB_i(S), \UCB_i(S)]$ on each configuration's mean utility $\theta_i$ as samples accumulate. In our setting, $\theta_i$ is the expected value of the user-specified sample utility (e.g., inverse accuracy or complexity-weighted error) for template~$i$. These bounds let COUP decide where to allocate samples, but in practice a user searching for hard questions may be indifferent to fine-grained distinctions among configurations that are all clearly above some hardness threshold. For example, on a 4-choice MCQA benchmark, any template whose error rate exceeds $0.75$ (the chance level) is unambiguously hard; continuing to distinguish between $0.80$ and $0.90$ error rate wastes budget that could be spent exploring other templates. Without intervention, COUP continues to allocate samples to its top configurations indefinitely, refining their confidence bounds. We introduce a \emph{certification} mechanism that removes a configuration from the active set once its lower confidence bound exceeds a user-specified threshold~$\tau$:
 if $\LCB_i(S) \ge \tau$
then configuration $i$ is certified and removed.
Once certified, a configuration no longer competes for samples, and the freed budget is redirected toward configurations with higher residual uncertainty. The threshold $\tau$ is a tunable parameter that controls the precision/coverage trade-off: a higher $\tau$ demands stronger evidence before certifying (tighter bounds at the top, fewer configurations certified), while a lower $\tau$ certifies earlier and spreads the budget more broadly. 

We also consider a softer certification rule, which we call \emph{adaptive certification}, that lies between the threshold-based rule above and continuing to refine top configurations without certification. As before, a configuration first enters the certified set when its lower confidence bound exceeds a user-specified threshold $\tau$. However, certified configurations remain active rather than being removed, and if $K$ configurations are currently certified, the algorithm treats the problem as maintaining the top $K$ hard configurations. It then allocates samples according to the LUCB-$K$ algorithm: the current top-$K$ set is defined as the $K$ configurations with highest empirical mean utility, and sampling alternates between the member of this set with the smallest lower confidence bound and the configuration outside the set with the largest upper confidence bound. As the certified set grows, the certification threshold is also raised. The new threshold is chosen to balance the evidence required to admit one more configuration against the evidence required to bring all $K$ configurations already in the set up to this stricter standard. The result is an intermediate procedure that is useful when the user has only a rough sense of the right hardness threshold: $\tau$ still anchors what counts as plausibly hard, but the algorithm adaptively balances the size of the certified set against the confidence with which that set is identified.


\paragraph{Diversity via repulsion among near-optimal configurations.}
Users may prefer to surface a \emph{diverse} set of high-utility configurations rather than a tight cluster of near-duplicates. We support this by augmenting the objective with a repulsion penalty that downweights candidates that are semantically or structurally close to a user-specified reference set. This reference set can be defined for any tolerance $\varepsilon$ (e.g., all configurations whose LCB is within $\varepsilon$ of the maximum observed utility). In our experiments we use the simplest instantiation, $\varepsilon=0$, so the reference set is exactly the set of \emph{certified} configurations.

To make repulsion work on structured identifier spaces without a hand-designed distance metric, we derive a simple proximity metric  from the random-forest surrogate itself. For two identifiers, we compute their similarity as the fraction of trees in which they land in the same leaf, or equivalently, the fraction of such shared leaf nodes across the ensemble. This shared-leaf metric automatically weights different identifier attributes by their learned importance (since splits on more informative attributes occur more often and earlier), yielding an interpretable, plug-and-play notion of ``closeness'' for repulsion.

\paragraph{Batching}
A practical issue in our setting is that COUP is inherently sequential: at each step it identifies (i) the configuration with the highest empirical mean and (ii) the configuration with the highest UCB, and its next decision depends on the realized outcomes of these evaluations. Because the procedure is deterministic, we cannot simply rerun the same step to obtain additional configurations for parallel evaluation. This is a problem because practical LLM evaluation requires highly parallel computation. We therefore need a proxy for the configurations COUP would have selected in sequence, so that we can bundle them into a batch and evaluate them in parallel.

To formalize what we want from batching, we compare against a simple reference process. Starting from the real pre-batch state $S_0$, define the \emph{expected-outcome world} as the sequential run of COUP in which each evaluation returns its conditional expectation (i.e., the configuration's mean reward), rather than a random draw.

We introduce a batching method that constructs an $N$-sized batch using only pre-batch information while aiming for two guarantees: (G1: \emph{conservativeness}) with high probability, the temporary (within-batch) confidence bounds used to construct the batch are no more optimistic than the bounds that would arise under the corresponding (counterfactual) real sequential updates; and (G2: \emph{coverage}) the batch contains a superset of the configurations COUP would evaluate over the same horizon in the expected-outcome world.

We defer the complete description and proofs of the batching procedure, which we call Frozen-LCB simulation, to the appendix (\Cref{app:proof}). At a high level, this procedure simulates $N$ steps of the COUP algorithm as if each returned sample comes from that configurations LCB. We first prove that this procedure satisfies (G1).

\begin{restatable}[Frozen-LCB simulated means are $(1-\delta)$-conservative for expected UCB]{theorem}{firsttheorem}
\label{thm:coup_fantasy}
Consider any batching procedure that performs $K$ simulated steps. At each step $t=1,\dots,K$, the procedure selects some configuration $i_t\in\mathcal{I}$ (possibly adaptively, using $D$ and all previous simulated steps), with current internal state $S_t$, and simulates sampling the frozen mean $\widetilde{m}_t \coloneqq L_{i_t}$.
Under Assumptions~\ref{prop:coup_monotone}--\ref{prop:coup_lcb_valid}, with probability at least $1-\delta$, the following holds \emph{simultaneously for all} $t=1,\dots,K$:
\(
\EUCB_{S_t,i_t}(L_{i_t}) ~\le~ \EUCB_{S_t,i_t}(\mu_{i_t}).
\)
Equivalently, at every simulated step, using $L_{i_t}$ as the simulated mean yields an expected post-update UCB value that is no larger than the expected post-update UCB value under the true mean, except with probability at most $\delta$.
\end{restatable}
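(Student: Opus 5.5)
The plan has two parts. First, deterministic monotonicity shows that the inequality at step $t$ follows from the single event $L_{i_t}\le\mu_{i_t}$. Second, a union bound over configurations makes this event hold for all steps simultaneously.

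The assumptions are not visible at this point. I read them as follows. Assumption~\ref{prop:coup_monotone} says that, for a fixed internal state $S$ and configuration $i$, the expected post-update UCB $\EUCB_{S,i}(m)$ is nondecreasing in the simulated mean $m$. Assumption~\ref{prop:coup_lcb_valid} says the frozen lower bounds $L_i$, computed from the pre-batch data $D$, are valid simultaneously: with probability at least $1-\delta$, $L_i\le\mu_i$ for all $i\in\mathcal{I}$.

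The key structural point is that each $L_i$ is a function of $D$ alone. The simulated steps never touch fresh randomness that could alter $L_i$; the $\widetilde m_t=L_{i_t}$ are frozen. So define the good event
\[
G \coloneqq \bigl\{\, L_i \le \mu_i \ \text{for all } i\in\mathcal{I} \,\bigr\}.
\]
By Assumption~\ref{prop:coup_lcb_valid}, $\Pr(G)\ge 1-\delta$. If that assumption is instead stated per configuration, I would use $\delta/|\mathcal{I}|$ for each and take a union bound; $\mathcal{I}$ is the finite current active set at batch time.

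On $G$, fix any realization of the adaptive procedure. At step $t$, the state $S_t$ and the index $i_t$ are determined by $D$ and earlier simulated steps. Since $i_t\in\mathcal{I}$, we have $L_{i_t}\le \mu_{i_t}$. Applying monotonicity of $m\mapsto \EUCB_{S_t,i_t}(m)$ with $S_t$ and $i_t$ held fixed gives $\EUCB_{S_t,i_t}(L_{i_t})\le \EUCB_{S_t,i_t}(\mu_{i_t})$. Because this holds for every $t$ on the same event $G$, the inequality is simultaneous in $t=1,\dots,K$ at no extra probabilistic cost. The ``equivalently'' clause is then just the restatement that the failure set is contained in $G^c$.

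The main obstacle is the adaptivity of $i_t$. A naive argument would union-bound over the steps $t$ and worry that selection depends on $L$, which would bias coverage. I sidestep this by quantifying over all $i\in\mathcal{I}$ before any selection happens. The event $G$ does not reference the selections, so any adaptive rule, even one that looks at all the $L_i$, lands inside $G$. The remaining care point is to confirm that the monotonicity assumption is stated pointwise for arbitrary states $S$, including simulated ones. If it is only stated for states reachable from real data, I would note that $S_t$ has the same form (counts plus sums) and that the UCB formula is monotone in the running empirical mean for every fixed count.
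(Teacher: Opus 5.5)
Your proposal is correct and is essentially the paper's proof. Both define the simultaneous-validity event $\{\forall i\in\mathcal{I}: L_i\le\mu_i\}$ from Assumption~\ref{prop:coup_lcb_valid}, then apply the monotonicity of $m\mapsto\EUCB_{S_t,i_t}(m)$, which Assumption~\ref{prop:coup_monotone} states for all states, including simulated ones, at every step on that one event. Your per-configuration union-bound fallback is unnecessary, because the paper's assumption is already stated simultaneously.
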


We then show that Frozen-LCB simulation also satisfies (G2), establishing that the resulting batch covers the configurations that COUP would select over the same horizon in the expected-outcome world.

\begin{restatable}{theorem}{secondtheorem}
\label{thm:sim_dominance}
Let $\mathcal{S}_{\text{real}}$ and $\mathcal{S}_{\text{sim}}$ be the sets of unique configurations sampled by the true sequential process (in expectation) and the frozen-LCB simulation, respectively, given a fixed budget $K$.
Under Assumptions \ref{prop:coup_monotone}--\ref{prop:local_updates}, with probability at least $1-\delta$:
 The simulation explores a superset of the configurations explored by the true process:
    $ \mathcal{S}_{\text{real}} \subseteq \mathcal{S}_{\text{sim}} $
    and if the containment is strict ($\mathcal{S}_{\text{real}} \subset \mathcal{S}_{\text{sim}}$), then the average number of samples allocated to the configurations in $\mathcal{S}_{\text{real}}$ is strictly lower in the simulation than in the true process.
\end{restatable}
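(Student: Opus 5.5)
We prove \Cref{thm:sim_dominance} by a coupling argument between two deterministic runs of the COUP selection rule, both started from the same pre-batch state $S_0$. The first is the expected-outcome world, in which each evaluation of configuration $i$ returns $\mu_i$. The second is the Frozen-LCB simulation, in which each evaluation of $i$ returns $L_i$. We work on the event $\mathcal{E}$ from \Cref{thm:coup_fantasy}, which has probability at least $1-\delta$. On $\mathcal{E}$ the LCB is valid, so $L_i \le \mu_i$ for every $i$ (\Cref{prop:coup_lcb_valid}), and at each simulated step the post-update expected UCB under the frozen mean is no larger than under the true mean. All remaining arguments are deterministic given $\mathcal{E}$.

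The key structural facts come from the assumptions. By monotonicity (\Cref{prop:coup_monotone}), feeding a smaller mean into an update yields a UCB and empirical mean that are no larger. By locality (\Cref{prop:local_updates}), an update to configuration $i$ changes only the bounds of $i$. The central lemma is therefore an invariant, proved by induction on the step count. Suppose configuration $i$ has received $n$ samples in the simulation and $n'$ samples in the real process. If $n \ge n'$, then its simulated UCB and empirical mean are at most their real counterparts after $n'$ samples. If $n<n'$, no such comparison is claimed, but we show that this case forces the simulation to have spent its excess budget on other configurations that look weaker than $i$ did in the real process.

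From the invariant we derive coverage by contradiction. Let $j\in\mathcal{S}_{\text{real}}$ be the first configuration, in order of real first selection, that the simulation never samples within $K$ steps. When the real process first picked $j$, $j$ had either the top empirical mean or the top UCB among the challengers. In the simulation, $j$'s statistics are frozen at their $S_0$ values, which equal the real ones at that moment, because $j$ had not yet been sampled there. Every other configuration $i$ already in play has, by the invariant and locality, simulated statistics no larger than in the real process at the corresponding sample counts. So if the simulation has given those configurations at least as many samples, $j$ must eventually become the argmax. The remaining possibility is that the simulation spent its budget elsewhere. But it can only do so on configurations whose UCB stays above $j$'s, and the invariant shows such configurations were also above $j$ in the real process. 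The real process would then also have sampled them before $j$, and a counting argument over the fixed budget $K$ gives the contradiction. Consequently $\mathcal{S}_{\text{real}}\subseteq\mathcal{S}_{\text{sim}}$.

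For the strict case, note that both processes use exactly $K$ samples. If the simulation samples some configuration outside $\mathcal{S}_{\text{real}}$, that sample is taken from the budget available to $\mathcal{S}_{\text{real}}$. Hence the total, and therefore the average, allocation to $\mathcal{S}_{\text{real}}$ is strictly lower in the simulation.

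The main obstacle is the invariant in the $n<n'$ case. Ordering comparisons between two adaptive runs are delicate, because the LUCB-style rule alternates between an empirical leader and a UCB challenger. A lowered mean can move a configuration out of the leader role, which changes which configurations are compared. I would first try to handle this by requiring monotonicity of the entire selection rule under pointwise decreases of statistics, and using \Cref{thm:coup_fantasy} to control the UCB role at each step. If that fails, I would strengthen the assumptions to a ``decrease-only exploration'' property: lowering the statistics of sampled configurations can never cause an unsampled configuration to be skipped.
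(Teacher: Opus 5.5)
Your proof has a genuine gap exactly where you flag it. The time-aligned invariant is established only when the simulation has given arm $i$ at least as many samples as the expected-outcome run ($n\ge n'$). For $n<n'$ you have no argument, yet your contradiction step needs that case. The decisive sentence (``a counting argument over the fixed budget $K$ gives the contradiction'') is asserted rather than carried out, even though that counting is essentially the whole theorem. Falling back on a new ``decrease-only exploration'' assumption would prove a different statement from the one under Assumptions~\ref{prop:coup_monotone}--\ref{prop:local_updates}. The leader-role difficulty you anticipate is also one the paper does not face. The incumbent half of each batch is assigned once from the pre-batch state, so the proof only concerns the challenger selections. These follow a pure greedy max-UCB rule, and new arms enter on a shared activation schedule at maximal UCB in both worlds.

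The missing idea is to stop comparing the two runs at equal times and instead compare per-arm costs as functions of a UCB level. Let $n_i(v)$ be the number of samples arm $i$ needs for its UCB to fall to $v$ or below. By arm-separability this depends only on arm $i$'s own updates. On the event of Theorem~\ref{thm:coup_fantasy} ($L_i\le\mu_i$ together with monotonicity), the simulated UCB falls no slower, so $n_i^{\mathrm{sim}}(v)\le n_i^{\mathrm{real}}(v)$ for all $i$ and $v$, including arms activated mid-run. Greedy max-UCB sampling then behaves like a descending water level that stops at $v^*$ with $\sum_i n_i(v^*)=K$. Evaluating the simulation's costs at $v^*_{\mathrm{real}}$ gives a total of at most $K$, so $v^*_{\mathrm{sim}}\le v^*_{\mathrm{real}}$. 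Since an arm is sampled exactly when its UCB at activation exceeds the stopping level, $\mathcal{S}_{\text{real}}\subseteq\mathcal{S}_{\text{sim}}$. Nothing is ever compared at a common time step, so your problematic case $n<n'$ never arises. Your contradiction skeleton could be completed the same way, by applying this per-arm comparison at the single level $v$ equal to $j$'s UCB at activation. Your argument for the strict-containment clause is fine and matches the paper's: with $K$ fixed, every sample the simulation spends outside $\mathcal{S}_{\text{real}}$ is taken away from it.
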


\section{\tool flexibility and extensibility} \label{sec:tool}
\tool separates the COUP-based optimizer from benchmark-specific design choices. To apply it to a new benchmark, users must specify three components: a utility, a search space over template identifiers, and a generator. They can optionally also specify a predictive model, in the case that the generator class has enough structure to support extrapolation to nearby unseen configurations. 

\paragraph{Utility and Search Space Specification}
COUP optimizes a scalar utility in $[0,1]$, which lets users encode different evaluation goals without changing the optimizer. A utility function maps an evaluation outcome to a scalar:
$
u: (\{\text{metrics}\}, x) \mapsto [0,1],
$
where $x\in\mathcal{X}$ is the template identifier and $\{\text{metrics}\}$ are the scored outputs which can include any diagnostics of interest. A standard choice is error rate, $u(\{\text{metrics}\},x)=1-\texttt{acc}(x)$. More structured utilities can incorporate template properties such as complexity $c(x)$ (e.g., the size of a graph when solving shortest path) to scale or normalize the error rate. COUP then just maintains confidence bounds on each template's mean utility $\theta_x \coloneqq \mathbb{E}[u(\{\text{metrics}\},x)]$ as samples accrue. The search space specifies which parameters constitute the identifier and what values are admissible. Importantly, users are not restricted to the default space: they can also narrow or reshape an existing dataset's search space (e.g., focus on a subset of domains or isolate a parameter subregion) while reusing the same generator and evaluation code.

\paragraph{Generator}
The only required dataset-specific logic is a generator
$
\Gen: x \mapsto \text{instance}(x;\omega),
$
which maps a template identifier $x$ to a concrete question by sampling within-template randomness $\omega$ (e.g., numeric values, names, contexts, or seeds). This design directly supports dynamic benchmarks: the search space identifies what kind of question we want, and the generator produces a fresh instantiation of that kind.

\paragraph{Predictive Model} \label{subsec:predmodel}
\tool uses two predictive components: a utility model $f:\mathcal{X}\to[0,1]$ for expansion and a proximity metric $d(x,x')$ for repulsion. These can be specified independently---for example, a Gaussian process for utility prediction and a pretrained embedding model for distance---but we use a Random Forest by default, which provides a good balance between predictive performance and retraining cost.

\section{Experimental Evaluation}\label{sec:exp_setup}\label{sec:results}

We evaluated \tool on three dynamic benchmarks---DyVal, Grid Reasoning, and STEER-ME---using error-rate utility. We additionally evaluated complexity-weighted error (CWE) on DyVal, defined as $(1 - \mathrm{acc}) \,/\, \log_2(\mathrm{pow}(n_{\mathrm{children}},{depth}))$, which upweights errors on shallower, simpler DAGs. Benchmark details are deferred to \Cref{app:benchmarks};  COUP hyperparameters, scalability modifications, prompting, and inference settings are in \Cref{app:experimental_setup}. Each run used a budget of \num{20000} model calls. We compared \tool to a uniform-sampling baseline with the same total budget. For DyVal and Grid Reasoning, we compared certification at the \qty{90}{\percent} level to no certification; for STEER-ME, whose 4-choice MCQA format has chance error rate \qty{75}{\percent}, we used a \qty{70}{\percent} certification threshold. We then re-evaluated each method's top configurations so that each had at least \num{200} total samples. For space considerations, we focus on \texttt{gpt-5.4-nano-2026-03-17} in the main text; we also ran a slightly older version of \tool on \texttt{gpt-5-mini}, wherein we observed the same trends, and is deferred to \Cref{app:cross_model}. 

\begin{shownto}{icml}
    \begin{figure*}[t]
        \centering
        \includegraphics[width=\linewidth]{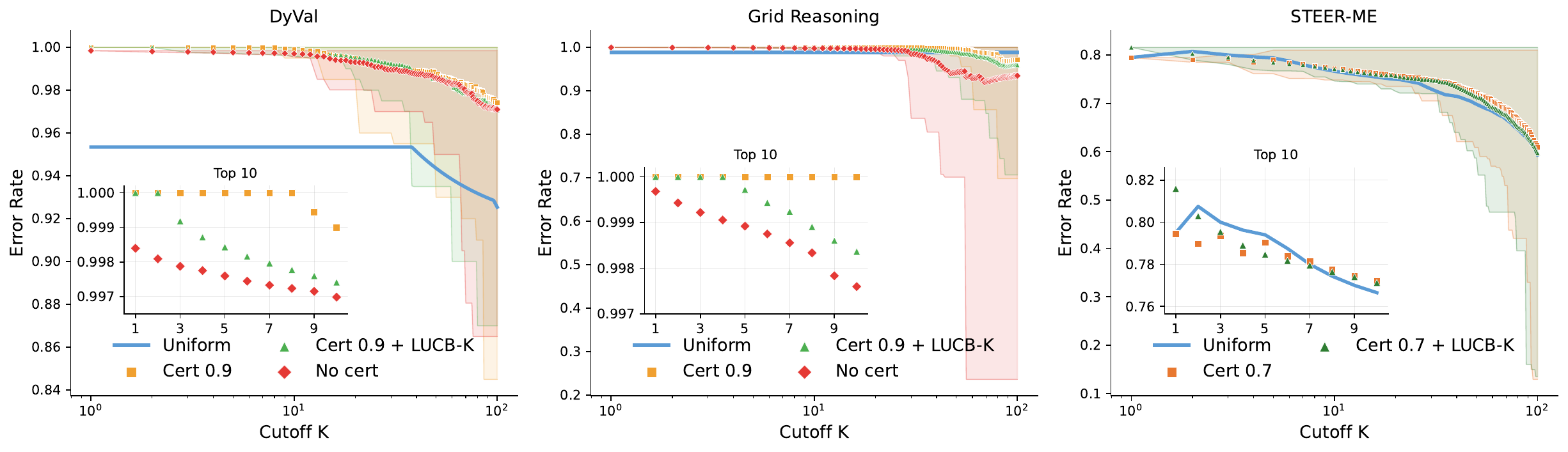}
        \caption{Cumulative average re-evaluation utility as a function of cutoff~$K$ (log-scaled $x$-axis). COUP variants (markers) are ranked by LCB; shaded bands show the running min/max re-evaluated utility, so non-monotonicities reflect individual configurations whose re-evaluated utility falls below the running average. Uniform sampling (solid line) is ranked by empirical mean; because uniform allocation produces many tied means, the line shows the expected cumulative average over random tie-breaking within each tie group. Inset panels zoom into the top~10 configurations.}
        \label{fig:cumavg_reeval}
    \end{figure*}
\end{shownto}


To validate that \tool's top-ranked configurations were truly difficult, we ranked configurations by their selection criterion---lower confidence bound (LCB) for COUP variants, empirical mean for uniform sampling---and plotted the cumulative average re-evaluation utility across the top-$K$ configurations as a function of~$K$ (\Cref{fig:cumavg_reeval}). For each method, shaded bands show the running minimum re-evaluated utility across the top~$K$. Because uniform sampling produces many configurations with the same empirical mean, their ranking within tied groups is arbitrary; the uniform markers therefore show the expected cumulative average over random tie-breaking. Thus, high re-evaluated utility at small~$K$ confirms that COUP is identifying truly hard configurations rather than artifacts of adaptive sampling. 

\Cref{app:weak_spots} gives additional detail on the top configurations found by each variant. In brief, DyVal's top-5 were all deep, narrow arithmetic tasks, consistent with prior observations that depth and compositional structure are key drivers of difficulty in DyVal-style benchmarks. However, whereas prior analyses identify challenging regimes at the level of task families or complexity trends, our method localizes this difficulty to specific regions of the configuration space. On Grid Reasoning, the hardest configurations were larger grids, with no~cert and adaptive~cert agreeing on the top two; and on STEER-ME, numerical \emph{intertemporal consumption smoothing} dominated.

\begin{shownto}{colm}
    \begin{figure}[t]
        \centering
        \includegraphics[width=\linewidth]{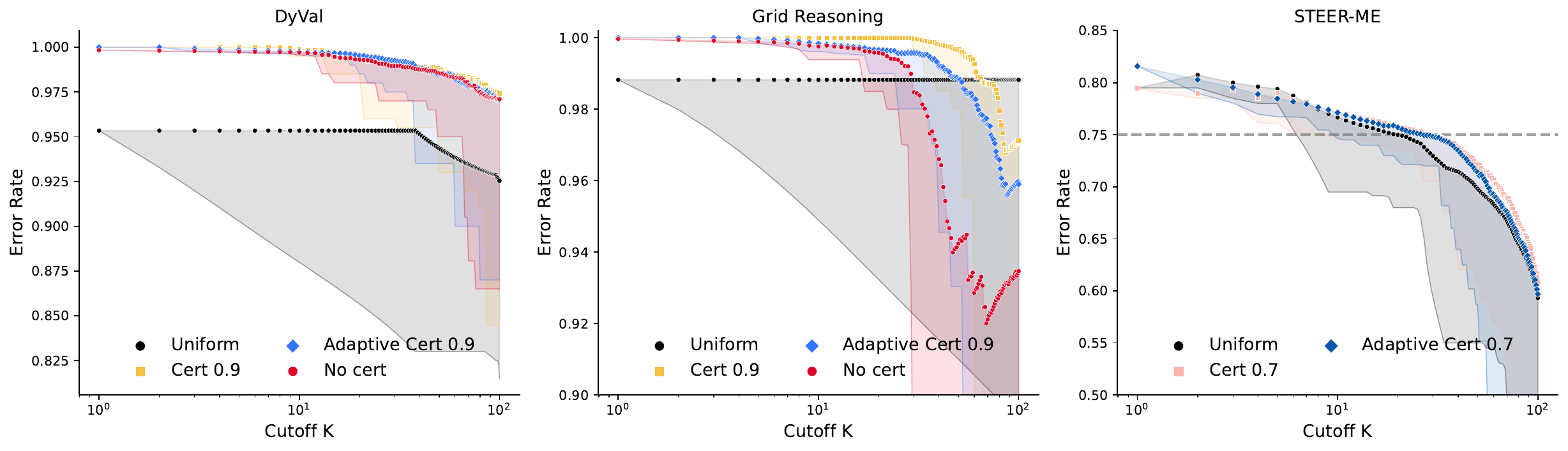}
        \caption{Cumulative average re-evaluation utility as a function of cutoff of top~$K$ configs (log-scaled $x$-axis). COUP variants are ranked by LCB; shaded bands show the running minimum re-evaluated utility. Uniform sampling is ranked by empirical mean; the plot shows the expected cumulative average over all possible ties. The dashed line on STEER-ME marks the error-rate random guessing would achieve.}
        \label{fig:cumavg_reeval}
    \end{figure}
\end{shownto}

Across all three benchmarks, COUP variants often maintained higher cumulative average utility than uniform sampling, particularly at small~$K$. On DyVal and Grid Reasoning, COUP's top-\num{10} configurations achieved near-perfect re-evaluated utility ($>0.99$), whereas the average utility uniform sampling achieved was below \num{0.98} across~$K$. On STEER-ME, the absolute utility levels were lower because its 4-choice MCQA format caps the hardest configurations near error rate $0.75$ rather than $1.0$.

The shaded running-minimum bands in \Cref{fig:cumavg_reeval} reinforce COUP's advantage. For COUP, even the \emph{worst} configuration among the top~$K$ tended to have utility comparable to or exceeding uniform sampling's \emph{average}. This means that COUP's adaptive allocation avoided the false positives of uniform sampling: configurations that appeared hard in an initial small sample but regressed toward moderate difficulty upon re-evaluation.

\paragraph{How Certification Shapes Confidence Bounds} \label{subsec:ci_calibration}

A key advantage of \tool over heuristic search methods is that COUP maintains statistically valid confidence bounds on each configuration's utility. \Cref
{fig:ci_whiskers} shows the empirical value of these bounds by overlaying the original LCB to UCB intervals (shaded bands) with the re-evaluated mean utility (dots) for the top \num{100} configurations ranked by LCB.

\begin{shownto}{colm}
    \begin{figure}[t]
        \centering
        \includegraphics[width=\textwidth]{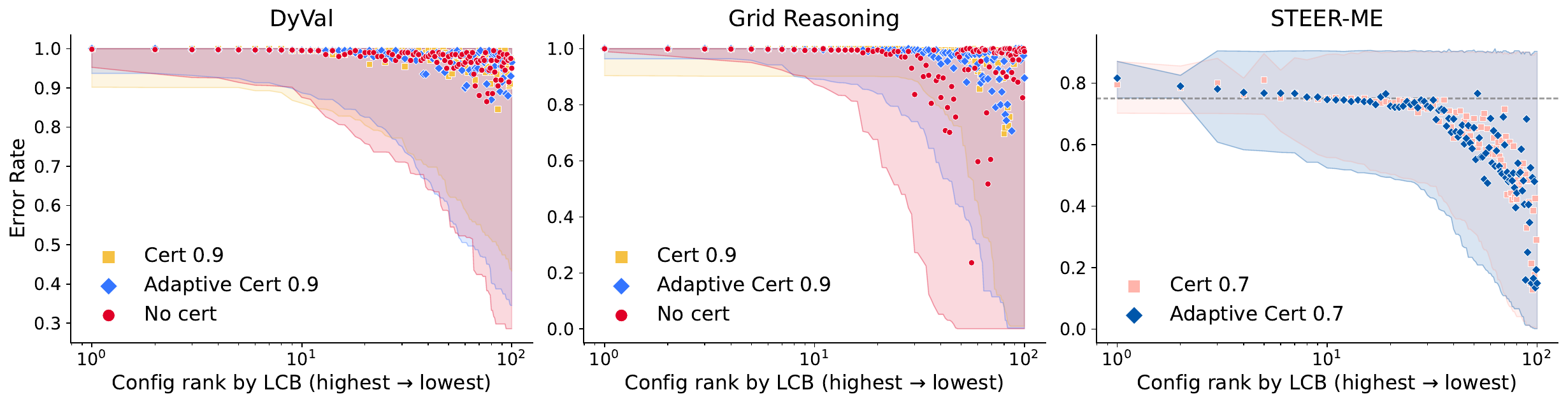}
        \caption{Original LCB--UCB confidence intervals (shaded bands) overlaid with re-evaluated mean utility (dots) for the top \num{100} configurations (rows) across DyVal, Grid Reasoning, and \textsc{STEER-ME} (columns). Log-scaled $x$-axis shows configuration rank by LCB.}
        \label{fig:ci_whiskers}
    \end{figure}
\end{shownto}
On DyVal and Grid Reasoning, the re-evaluated means fell within the original confidence intervals for every configuration in the top~\num{100}, as should be expected from statistically valid bounds. More interestingly, we can examine the width of these intervals, which differed systematically between certified and uncertified runs. Without certification, COUP concentrated samples on its top configurations, producing very tight confidence intervals at the top of the ranking---its estimates were precise because those configurations received the most samples. With certification, top configurations were removed from the active set once their LCB exceeded the threshold, so they received fewer total samples and their intervals were wider. In return, the freed budget was redistributed to tighten confidence intervals on configurations further down the ranking. On Grid Reasoning, for example, certification widened the average CI for the top~\num{10} by \qty{8.1}{\percent} while narrowing CIs for positions 11--100 by \qty{20.0}{\percent}. The practical effect was that certification traded a relatively small amount of precision at the very top---where the configurations were already known to be hard---for more reliable identification of a larger set of hard configurations further down the ranking.

This tradeoff is not binary, however. Adaptive certification raises its effective threshold as more configurations certify, so it does not require the user to know \textit{ex ante} how many configurations will clear the threshold. On DyVal, except at the 1st and 9th configurations, adaptive certification had higher LCBs than both cert~0.9 and no~cert through rank 28; Grid Reasoning, showed a similar trend for $K= 2$--$10$ (\Cref{fig:ci_whiskers}). Fixed-threshold certification certified aggressively, accepting configurations at a lower effective bar than may be desirable and diluted the certified set. Adaptive certification certified fewer configurations but at a higher level of certainty, producing a tighter, more reliable ranking in this regime.

On STEER-ME, the picture was qualitatively similar but with wider intervals and more spread in re-evaluated means, due to the benchmark's MCQA format. The per-sample variance of a Bernoulli outcome is $p(1{-}p)$, which is roughly $0.19$ at $p=0.75$ but only $0.01$ at $p=0.99$. In other words, each evaluation of a STEER-ME template was nearly $20{\times}$ noisier than an evaluation of a near perfectly-hard DyVal or Grid Reasoning template, so COUP required proportionally more samples to achieve the same bound width. 

\paragraph{Utility Functions Surface Different Weak Spots} \label{subsec:utility_comparison}

A distinctive feature of \tool is that users can specify different utility functions to target different notions of model weakness. To illustrate the difference this can make, we compared two utility functions on DyVal: error rate (ER), and complexity-weighted error (CWE).

\Cref{fig:cwe_comparison} takes the top~\num{100} configurations from the ER-optimized and CWE-optimized runs and retroactively scores them under the same metric. The left panel scores both sets by CWE: the CWE-optimized configurations achieve substantially higher complexity-weighted error across all~$K$. The right panel scores both sets by error rate: the ER-optimized configurations dominate instead. Each objective finds configurations that the other misses. Crucially, however, the cost of optimizing CWE instead of ER---measured in raw error rate---was small. At $K=10$ (without certification), CWE-optimized configurations still achieved a \qty{91.1}{\percent} average error rate, compared to \qty{99.4}{\percent} for ER-optimized configurations. Yet the two utilities surfaced qualitatively different templates. Among the top-100 configurations, CWE overwhelmingly selected shallow templates (\qty{89}{\percent} at depth 2--3), whereas ER selected deep templates (\qty{76}{\percent} at depth 7--10), and configuration overlap was negligible (only 6 of 100 configurations appear in both top-\num{100} sets). ER found configurations that were hard in absolute terms---high-complexity templates where the model achieved near-zero accuracy. CWE found configurations that were hard \emph{relative to their complexity}: low-complexity templates where the model failed despite the problem's structural simplicity. Evaluated on CWE's own scale, CWE-optimized configurations scored \qty{4.3}{\times} higher than ER-optimized configurations ($0.46$ vs.\ $0.11$), confirming that CWE surfaced failures in a region of the configuration space that ER left largely unexplored. This gap persisted across $K$: at $K=50$, the CWE-axis ratio was still \qty{3.9}{\times}, while the error-rate gap widened only modestly to $12$ percentage points. 

\begin{figure}[t]
    \centering
    \includegraphics[width=0.9\linewidth]{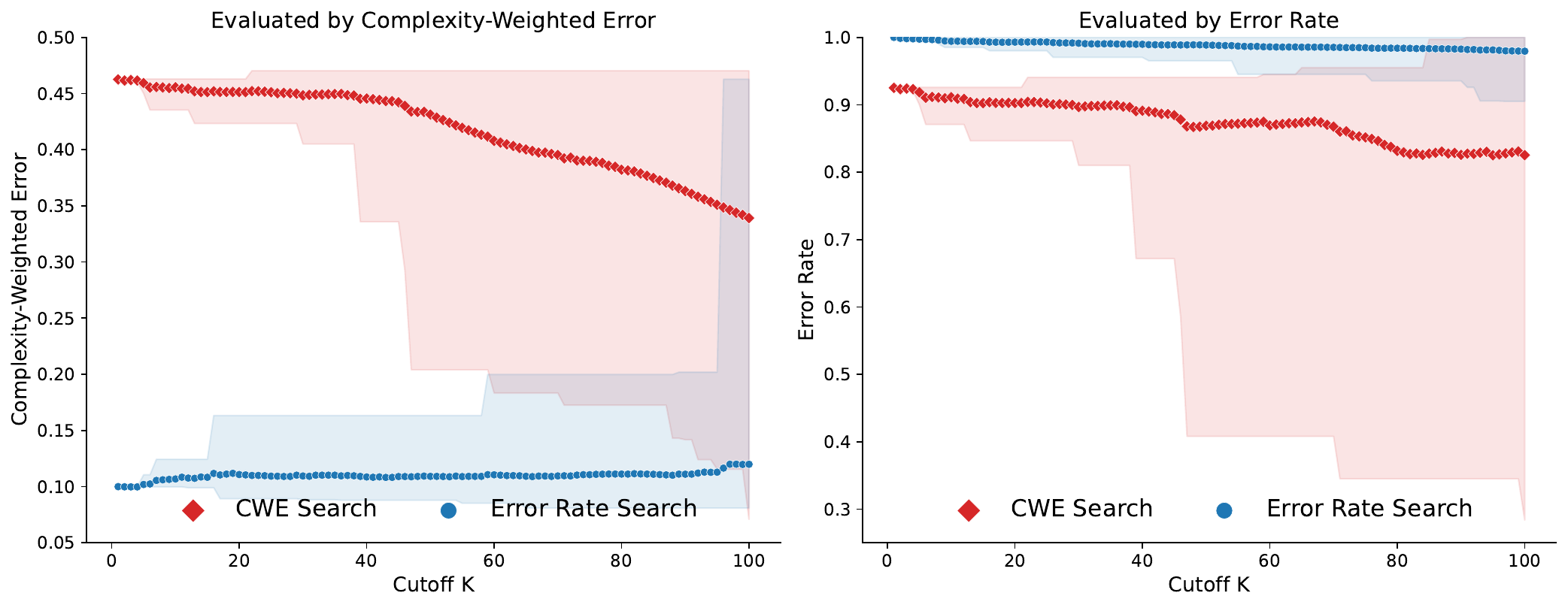}
    \caption{Cumulative average utility on DyVal comparing CWE-optimized (red) and ER-optimized (blue) searches. Plots show runs scored under CWE (left) and ER (right).}
    \label{fig:cwe_comparison}
    \vspace{-0.5em}
\end{figure}

\paragraph{Repulsion Surfaces Diverse Failure Modes} \label{subsec:repulsion}

Finally, we examined the effect of adding a repulsion bonus when certification was enabled at an \qty{80}{\percent} threshold. Repulsion reduced the total number of certified configurations on both DyVal and Grid Reasoning, but it produced a more diverse certified set. On DyVal, the non-repulsive run certified more configurations overall (\num{81} vs.\ \num{46}), but all of them came from the arithmetic task. With repulsion, by contrast, certification still identified hard arithmetic configurations while also surfacing 6 certified configurations from the linear equation task. A similar pattern appeared on Grid Reasoning: repulsion again certified fewer configurations in total (\num{51} vs.\ \num{87}), but the resulting set spanned a broader portion of the benchmark. In \emph{Largest Island}, the certified configurations covered more unique values of the number of islands and island size, and in \emph{Shortest Path} they covered a wider range of blocking probabilities. Repulsion therefore traded off raw certification count for broader coverage of distinct regions of the configuration space, which can be preferable when the goal is not merely to certify many hard configurations, but to surface a wider variety of failure modes.

\section{Discussion} \label{sec:discussion}

Across benchmarks, \tool found more reliably difficult configurations than uniform sampling while reducing false positives from noisy outcomes, and our extensions made this approach practical for modern LLM evaluation pipelines. We intentionally designed \tool as a flexible framework that others can adapt to their own benchmark generators, utility functions, and evaluation goals. At the same time, our methodology is built around objectives that rank configurations by expected utility, that is, by a sample mean or a monotone transformation of one. An important direction for future work is to extend this framework to objectives that instead target a specific performance level. For example, in multiple-choice evaluation one may care most about configurations whose accuracy is close to random guessing, since chance-level performance suggests the model is not extracting usable signal from the task, whereas systematically below-chance performance can itself reflect a learnable pattern. While one could define an aggregate objective based on distance to chance accuracy, that quantity depends on the configuration’s unknown mean accuracy rather than decomposing into per-sample utilities. Handling such objectives would therefore require different confidence-bound and certification tools aimed at identifying configurations near a target performance level, rather than simply ranking them by utility.



\bibliography{ref}
\bibliographystyle{icml2025}

\appendix

\section{Batching Procedure} \label{app:proof}

We now give the full description of our Frozen-LCB simulation batching. 
We implement batching via a two-stage simulation procedure. First, we simulate the
\emph{activation schedule}: starting from the real pre-batch state, we run a
counterfactual sequential version of COUP for $N$ steps in which each sampled
configuration is updated as if its observed outcome were its current empirical
mean. During this simulation, whenever our heuristic for introducing a new
configuration is triggered, we record the corresponding sample index. This gives
a schedule of timesteps at which new configurations would be added to the active
set.

We then construct the actual batch. For an even batch size $N$, we first assign
$\frac{N}{2}$ evaluations to the configuration with the highest current empirical
mean in the real pre-batch state. We use the remaining $\frac{N}{2}$ evaluations
to build the challenger half of the batch iteratively. For
$t=1,\dots,\frac{N}{2}$, we:
(i) add any newly introduced configurations whose recorded activation time is
the current timestep;
(ii) query COUP for the next configuration $i_t$ it would evaluate from the
current temporary state; and
(iii) apply a \emph{temporary simulated update} to $i_t$ by updating its
sufficient statistics (for example, its count $n_{i_t}$ and empirical mean
$\hat\mu_{i_t}$) as if one additional sample with outcome $\tilde y_{i_t}$ had
been observed, and then recomputing its confidence bounds from these updated
statistics.

After the full list of $N$ evaluations has been constructed, all temporary
simulated updates are discarded. We then execute the $N$ real evaluations in
parallel and update COUP using only the realized outcomes.

It remains to specify the simulated outcomes $\tilde y_i$. Because COUP is optimistic, we choose simulated outcomes that keep within-batch UCBs conservative in the sense of (G1). Concretely, we simulate each within-batch evaluation of configuration $i$ as if it returned its \emph{pre-batch} lower confidence bound, and we \emph{freeze} this value throughout batching:
$
\tilde y_i \;=\; \LCB_i(S_0).$
Intuitively, repeatedly updating an arm with outcomes at its (frozen) LCB can only reduce its apparent optimism, encouraging diversity among the remaining batch slots. In the analysis below, we show that this choice yields (G1) and that the configurations selected by the batching loop satisfy (G2).

To establish these guarantees we rely on three regularity conditions on COUP's bounds:
\begin{property}[Monotonicity]
\label{prop:coup_monotone}
For all states $S$ and configurations $i$, the map $m\mapsto \EUCB_{S,i}(m)$ is nondecreasing.
\end{property}

\begin{property}[Simultaneous LCB validity]
\label{prop:coup_lcb_valid}
\(
\Pr\Big(\forall i\in\mathcal{I}:~ L_i \le \mu_i\Big) \ge 1-\delta.
\)
\end{property}

\begin{property}[Arm-separability and pessimistic updates are nonincreasing]
\label{prop:local_updates}
Updating $j\neq i$ does not change $\UCB_i$. Moreover, for any state $S$ and $i\in\mathcal I$,
\(
y\le \hat\mu_i(S)\quad\Longrightarrow\quad \UCB_i(\Update(S,i,y)) \le \UCB_i(S).
\)
\end{property}

\firsttheorem*
\begin{proof}
Let $E \coloneqq \{\forall i\in\mathcal{I}:\ L_i \le \mu_i\}$. By \Cref{prop:coup_lcb_valid}, $\Pr(E)\ge 1-\delta$.
On $E$, for every step $t$ we have $L_{i_t}\le \mu_{i_t}$. By \Cref{prop:coup_monotone}, $m\mapsto \EUCB_{S_t,i_t}(m)$ is nondecreasing for every (possibly simulated-constructed) state $S_t$, hence
$\EUCB_{S_t,i_t}(L_{i_t}) \le \EUCB_{S_t,i_t}(\mu_{i_t})$ for each $t$ on the same event $E$.
Therefore the inequalities hold simultaneously for all $t=1,\dots,K$ with probability at least $1-\delta$.
\end{proof}

\begin{remark}[Least pessimistic frozen simulated mean]
Under Assumption~\ref{prop:coup_monotone}, among all deterministic frozen choices $\tilde{m_i}(D)$ satisfying
$\Pr(\forall i\in\mathcal{I}: \widetilde{m}_i(D)\leq \mu_i)\geq 1-\delta$, the choice $\tilde{m_i}(D)=L_i(D)$ is pointwise maximal on the event of simultaneous LCB validity. In this sense, using the LCB is the \emph{least pessimistic} frozen simulated mean that preserves a $(1-\delta)$ conservativeness guarantee.
\end{remark}

We now show that the set of configurations sampled by pessimistic sampling is a superset of those that would be sampled in an ``expected outcome'' world (i.e. where each sample is simulated as the true mean).

\secondtheorem*
\begin{proof}
We only need to argue this fact for the set of UCB challengers. Consider the sequential algorithm as a process of lowering a UCB threshold $v$. For any configuration $i$, let $n_i(v)$ be the number of samples required to reduce $\UCB_i$ to or below $v$. The algorithm halts at a specific threshold $v^*$ such that the total samples $\sum_i n_i(v^*) = K$.

First, observe that for any fixed threshold $v$, the simulation requires fewer or equal samples to drive a configuration's UCB below $v$ compared to the expected real trajectory. This follows from Theorem~\ref{thm:coup_fantasy}: since $L_i \le \mu_i$, the simulated UCB decreases no slower than the expected real UCB. Thus,
$
n_i^{\mathrm{sim}}(v) \le n_i^{\mathrm{real}}(v)
\qquad\text{for all } i,v.
$

Now consider the effect of newly introduced configurations. By assumption, whenever a new configuration appears, it appears at the same timestep in both worlds and with the same maximal UCB. Hence, at the moment of introduction, it joins the set of arms above the current threshold (indeed, with maximal possible UCB), and must be sampled until its UCB is reduced to that threshold. The same comparison as above applies to such an arm from the moment it is introduced onward: for any fixed threshold $v$, the pessimistic simulation needs no more samples than the expected-outcome process to reduce that arm's UCB to $v$. Therefore, the introduction of new arms does not alter the threshold argument; it only enlarges the set of arms that must be driven down to the current threshold, and the pessimistic simulation can do this at least as quickly as the expected-outcome world.

Now, let $v^*_{\text{real}}$ be the final stopping threshold of the true process. The total sample cost for the simulation to reach this same threshold is:
$$
\sum_i n_i^{\mathrm{sim}}(v^*_{\text{real}})
\le
\sum_i n_i^{\mathrm{real}}(v^*_{\text{real}})
=
K.
$$
Since the simulation achieves the real quality standard $v^*_{\text{real}}$ using at most $K$ samples, it must lower its threshold at least as far to consume the full budget. Thus, the simulation's effective stopping threshold satisfies
$$
v^*_{\text{sim}} \le v^*_{\text{real}}.
$$

A configuration is included in the set $\mathcal{S}$ if and only if its UCB at the time it becomes active exceeds the eventual stopping threshold. Since $v^*_{\text{sim}} \le v^*_{\text{real}}$, any configuration $i \in \mathcal{S}_{\text{real}}$ necessarily also satisfies this condition for the simulation, and is therefore included in $\mathcal{S}_{\text{sim}}$.

The remainder of the argument follows from the simple fact that $K$ is constant in both worlds: if the set of configurations that the $K$ samples are spread across is larger in the simulation, then the average number of samples per configuration in $\mathcal{S}_{\text{real}}$ must be strictly smaller.
\end{proof}

\section{Experimental Setup} \label{app:experimental_setup}

\subsection{COUP Hyperparameters}

All experiments use the same COUP specifications unless otherwise noted. Each batch consists of $B = 20$ tickets (model calls). The batch is split into two phases: half for the incumbent (highest empirical mean) and half allocated by UCB. Within the UCB phase, COUP may add new configurations to the active set when the squared epsilon-prime condition is met (see \Cref{app:proof}). The total budget per run is \num{20000} model calls, yielding \num{1000} batches. COUP is initialized with $n_0 = 50$ random configurations. The exploration parameter is $e = 1.0$, confidence parameter $\delta = 0.01$, and random exploration probability $1/2$ (i.e., one-half of UCB-phase selections are replaced by random configuration proposals when the surrogate model is active).

\subsection{Scalability Modifications} \label{app:scalability}

COUP's theoretical guarantees apply to the exact algorithm, but we made two practical modifications:

\begin{enumerate}[leftmargin=15pt]
    \item \textbf{Deferred bounds refresh on pool growth.} Adding a new configuration to the active set changes the per-arm confidence level $\delta_i = \delta / (26.71 \cdot n^2 \cdot m_i^2)$ for every existing configuration, technically requiring a full bounds recomputation. We defer this recomputation until the pool size has grown by a factor of \qty{1.5}{\times} since the last refresh, amortizing the $O(n)$ update cost. Configurations with fewer than $2$ samples are skipped during the refresh, as their bounds are maximally wide regardless.
    \item \textbf{Surrogate model.} A random-forest surrogate is trained on observed (configuration, utility) pairs and used to propose candidate configurations during exploration. The surrogate is retrained whenever the number of observed configurations with at least $2$ samples exceeds the count at the last training step.
\end{enumerate}

These modifications do not affect the validity of the confidence bounds for configurations that have already been refreshed; they only delay the point at which new-pool-size--adjusted bounds become available, introducing a conservative lag of at most one refresh cycle.

\subsection{Prompting and Response Parsing}

All models are prompted to place their final answer inside a \verb|\boxed{}| delimiter. For numeric tasks (DyVal, Grid Reasoning), the prompt reads: \emph{``Please reason step-by-step \ldots\ Provide your final answer inside \textbackslash boxed\{\,\}. For example, respond with \textbackslash boxed\{42\}.''} For MCQA tasks (STEER-ME), the prompt asks for a letter inside \verb|\boxed{}|. Responses are parsed by extracting the last \verb|\boxed{...}| token with nested-brace support; if no \verb|\boxed{}| is found, fallback heuristics attempt ``\texttt{Answer: X}'' and bare ``\texttt{boxed X}'' patterns.

Timeouts, generation failures, and unparseable responses are all treated as correct (utility $= 0.0$ under error-rate), so that COUP does not chase these kinds of errors.

\subsection{Model and Inference Settings}

\Cref{tab:model_settings} summarizes the inference settings for each model. OpenAI reasoning-series models (\texttt{gpt-5-mini}, \texttt{gpt-5.4-nano}) use API-level reasoning effort controls. 

\begin{table}[h]
    \centering
    \small
    \begin{tabular}{lllc}
    \toprule
    Model & Provider & Reasoning effort & Temperature \\
    \midrule
    \texttt{gpt-5.4-nano-2026-03-17} & OpenAI & none & (API default) \\
    \texttt{gpt-5-mini-2025-08-07} & OpenAI & minimal & (API default) \\
    \bottomrule
    \end{tabular}
    \caption{Model inference settings.}
    \label{tab:model_settings}
\end{table}

Per-request timeouts are set to \num{120} seconds (2 minutes). Requests that exceed this limit are recorded as timeouts and treated as correct. \Cref{tab:error_rates} reports the empirical rates of timeouts and unparseable responses (pooled across all runs on each benchmark).

\begin{table}[h]
    \centering
    \small
    \begin{tabular}{llrr}
    \toprule
    Model & Benchmark & Timeout rate & Unparse rate \\
    \midrule
    \multirow{3}{*}{\texttt{gpt-5.4-nano}} & DyVal & \qty{<0.01}{\percent} & \qty{<0.01}{\percent} \\
     & Grid Reasoning & \qty{0.35}{\percent} & \qty{5.2}{\percent} \\
     & STEER-ME & \qty{0}{\percent} & \qty{1.2}{\percent} \\
    \midrule
    \multirow{3}{*}{\texttt{gpt-5-mini}} & DyVal & \qty{0}{\percent} & \qty{4.6}{\percent} \\
     & Grid Reasoning & \qty{<0.01}{\percent} & \qty{9.3}{\percent} \\
     & STEER-ME & \qty{0}{\percent} & \qty{7.8}{\percent} \\
    \bottomrule
    \end{tabular}
    \caption{Timeout and unparseable-response rates, pooled across all runs (COUP variants + uniform) on each benchmark. Unparseable responses are those where the model produced output but no valid answer could be extracted from the \texttt{\textbackslash boxed\{\}} delimiter or fallback heuristics. Both are treated as correct.}
    \label{tab:error_rates}
\end{table}

Timeouts are negligible for both models. Unparseable responses are concentrated on Grid Reasoning, where the model must produce a path description or numeric area---\texttt{gpt-5-mini}'s higher unparse rate (\qty{9}{\percent} vs.\ \qty{5}{\percent}) reflects its tendency to produce verbose reasoning that omits the \verb|\boxed{}| delimiter. On STEER-ME, \texttt{gpt-5-mini} also has a higher unparse rate (\qty{8}{\percent} vs.\ \qty{1}{\percent}), likely because its ``minimal'' reasoning effort occasionally produces malformed MCQA answers. Because all such failures are counted as correct, they do not inflate measured error rates and thus the utility signal.

\section{Benchmarks} \label{app:benchmarks}

\textsc{DyVal} \citep{zhu2024dyvaldynamicevaluationlarge} is a benchmark in which each template corresponds to a structured reasoning object with associated features (e.g., a notion of complexity). Instances are generated by sampling concrete realizations from the object while preserving its underlying structure. In our experiments, the template identifier selects the underlying reasoning object (and any discrete template attributes exposed by the benchmark), while the generator samples i.i.d.\ realizations used to estimate accuracy and to support utilities that depend on template-level metadata such as complexity. The configuration space comprises six parameters: dataset type (5 reasoning tasks), depth (2--10), children per node (2--4), extra DAG links (0--3), random description perturbations (0--3), and node ordering (3 schemes). After forbidding combinations that yield computationally prohibitive tree sizes (e.g., both deep and wide structures), the search space contains $|\mathcal{X}| = 4{,}440$ valid templates.

\begin{shownto}{icml}
    \begin{figure*}[h]
        \centering
        \includegraphics[width=\textwidth]{plots/ci_whiskers_logx.pdf}
        \vspace{-1.5em}
        \caption{Original LCB--UCB confidence intervals (shaded bands) overlaid with re-evaluated mean utility (dots) for the top \num{100} configurations (rows) across DyVal, Grid Reasoning, and \textsc{STEER-ME} (columns). Log-scaled $x$-axis shows configuration rank by LCB. Re-evaluated means consistently fall within the original intervals, validating COUP's confidence bounds under the batching extensions of \Cref{sec:coup}.}
        \label{fig:ci_whiskers}
    \end{figure*}
\end{shownto}

\textsc{STEER-ME} \citep{raman2025steermeassessingmicroeconomicreasoning} is a benchmark for non-strategic microeconomic reasoning. It organizes questions around $58$ economic \emph{elements} (e.g., optimization, demand, comparative statics) and supports controlled variation in $10$ \emph{domains} (e.g., medical, finance, public policy) and $15$ types (e.g., functional forms, bits of precision, number of optimization dimensions). A template identifier in \textsc{STEER-ME} is a structured tuple that selects an element, a domain, and a type specification; sampling an instance then draws fresh problem parameters (e.g., numerical values and text that does not effect the logic of the question like the gender or first/second person) consistent with that template. We used the public MCQA version of the benchmark whose search space contains $303$ templates.


\textsc{Grid Reasoning} \citep{stojanovski2025reasoninggymreasoningenvironments} is a dynamic, programmatic benchmark we built by combining two \texttt{reasoning-gym} task families: \emph{Shortest Path} and \emph{Largest Island}. Both tasks generate 2D grid instances from a distribution over grids (e.g., grid size and randomly generated layouts), and each instance asks for a deterministic property of the sampled grid (e.g., a shortest-path quantity for \emph{Shortest Path}, or the size of the largest connected component for \emph{Largest Island}). We define template identifiers that expose the main discrete structure of the generator (task family, and optionally coarse bins for generation settings such as grid size or density), while the instance generator fills in the remaining degrees of freedom by sampling a fresh grid realization. This yields a large space of potential instances even for a fixed template identifier, and a large overall evaluation space once we vary task family and generation settings. The configuration space is mixed discrete--continuous: shared grid dimensions ($21 \times 21$ choices for rows and columns in $[5,25]$), task-specific parameters (a continuous blocking probability for \emph{Shortest Path}; island count and size ranges for \emph{Largest Island} subject to min $\leq$ max constraints). Overall it contains $|\mathcal{X}| > 2.3 \times 10^6$ discrete templates plus one continuous degree of freedom.

\section{Identified Weak Spots} \label{app:weak_spots}

Tables~\ref{tab:top5_dyval}--\ref{tab:top5_steerme} list the five highest-LCB configurations found by each COUP variant on each benchmark (gpt-5.4-nano, error-rate utility).

\begin{table}[h]
\centering
\small
\begin{tabular}{clccccccc}
\toprule
Variant & Rank & Type & Depth & Children & Extra links & Rand.\ desc. & Order & LCB \\
\midrule
\multirow{5}{*}{No cert}
 & 1 & arith. & 10 & 2 & 0 & 2 & rev. & 0.952 \\
 & 2 & arith. & 10 & 2 & 0 & 1 & rev. & 0.936 \\
 & 3 & arith. & 9  & 2 & 0 & 0 & rand. & 0.927 \\
 & 4 & arith. & 9  & 2 & 0 & 0 & rev. & 0.925 \\
 & 5 & arith. & 10 & 2 & 1 & 2 & topo. & 0.914 \\
\midrule
\multirow{5}{*}{Cert 0.9}
 & 1 & arith. & 10 & 2 & 0 & 3 & topo. & 0.902 \\
 & 2 & arith. & 9  & 2 & 1 & 2 & rand. & 0.901 \\
 & 3 & arith. & 10 & 2 & 0 & 3 & rev. & 0.901 \\
 & 4 & arith. & 10 & 2 & 0 & 0 & rev. & 0.901 \\
 & 5 & arith. & 9  & 2 & 0 & 0 & rand. & 0.901 \\
\midrule
\multirow{5}{*}{Adaptive}
 & 1 & arith. & 9  & 2 & 1 & 2 & rev. & 0.937 \\
 & 2 & arith. & 10 & 2 & 0 & 0 & rev. & 0.937 \\
 & 3 & arith. & 10 & 2 & 0 & 1 & rand. & 0.932 \\
 & 4 & arith. & 10 & 2 & 0 & 3 & rev. & 0.926 \\
 & 5 & arith. & 10 & 2 & 0 & 0 & rand. & 0.924 \\
\bottomrule
\end{tabular}
\caption{Top-5 configurations on \textbf{DyVal} by LCB. Dataset types: 1=arithmetic, 2=linear equation, 3=Boolean logic, 4=deductive logic, 5=abductive logic. Order: 1=topological, 2=reversed, 3=random.}
\label{tab:top5_dyval}
\end{table}

\begin{table}[h]
\centering
\small
\begin{tabular}{clccccc}
\toprule
Variant & Rank & Task & Rows & Cols & Islands / $p_{\text{blocked}}$ & LCB \\
\midrule
\multirow{5}{*}{No cert}
 & 1 & largest island & 21 & 9 & 7 isl., size 2--20 & 0.990 \\
 & 2 & shortest path  & 21 & 23 & --- & 0.974 \\
 & 3 & largest island & 14 & 13 & 8 isl., size 2--15 & 0.963 \\
 & 4 & largest island & 8  & 19 & 5 isl., size 4--16 & 0.956 \\
 & 5 & shortest path  & 18 & 22 & --- & 0.950 \\
\midrule
\multirow{5}{*}{Cert 0.9}
 & 1 & largest island & 19 & 10 & 9 isl., size 5--17 & 0.904 \\
 & 2 & shortest path  & 21 & 14 & --- & 0.903 \\
 & 3 & largest island & 21 & 16 & 8 isl., size 4--15 & 0.902 \\
 & 4 & largest island & 15 & 12 & 8 isl., size 0--18 & 0.902 \\
 & 5 & shortest path  & 22 & 12 & --- & 0.902 \\
\midrule
\multirow{5}{*}{Adaptive}
 & 1 & largest island & 21 & 9  & 7 isl., size 2--20 & 0.964 \\
 & 2 & shortest path  & 21 & 23 & --- & 0.964 \\
 & 3 & largest island & 18 & 10 & 9 isl., size 5--10 & 0.964 \\
 & 4 & largest island & 7  & 14 & 9 isl., size 0--16 & 0.964 \\
 & 5 & largest island & 17 & 14 & 6 isl., size 1--19 & 0.959 \\
\bottomrule
\end{tabular}
\caption{Top-5 configurations on \textbf{Grid Reasoning} by LCB.}
\label{tab:top5_grid}
\end{table}

\begin{table}[h]
\centering
\small
\begin{tabular}{clllcc}
\toprule
Variant & Rank & Element & Domain & Type & LCB \\
\midrule
\multirow{5}{*}{Cert 0.7}
 & 1 & intertemporal cons.\ smooth. & education & numerical & 0.702 \\
 & 2 & capital market distortions & consumer goods & output & 0.701 \\
 & 3 & intertemporal cons.\ smooth. & entertainment & numerical & 0.701 \\
 & 4 & deriving demand & education & good\_2 & 0.700 \\
 & 5 & intertemporal cons.\ smooth. & medical & numerical & 0.698 \\
\midrule
\multirow{5}{*}{Adaptive}
 & 1 & intertemporal cons.\ smooth. & medical & numerical & 0.751 \\
 & 2 & intertemporal cons.\ smooth. & entertainment & numerical & 0.751 \\
 & 3 & deriving demand & education & good\_2 & 0.608 \\
 & 4 & intertemporal cons.\ smooth. & envir.\ policy & numerical & 0.583 \\
 & 5 & deriving labor supply & finance & Cobb--Douglas & 0.579 \\
\bottomrule
\end{tabular}
\caption{Top-5 configurations on \textbf{STEER-ME} by LCB. Element names are abbreviated.}
\label{tab:top5_steerme}
\end{table}

All three benchmarks show clear thematic patterns. On DyVal, every top-5 configuration across all variants was an arithmetic task at maximum or near-maximum depth (9--10) with narrow trees (2 children per node)---deep, narrow arithmetic reasoning chains were consistently the hardest for the model. On Grid Reasoning, the top configurations were split between \emph{largest island} (with many islands on medium-to-large grids) and \emph{shortest path} (on large grids), with no~cert and adaptive~cert finding the same rank-1 and rank-2 configurations. On STEER-ME, \emph{intertemporal consumption smoothing} with numerical question types dominated across both variants, appearing in 3 of 5 slots for cert~0.7 and 3 of 5 for adaptive~cert.

\section{GPT-5 (Mini) Results} \label{app:cross_model}

The main body presents results for \texttt{gpt-5.4-nano}. Here we show that the qualitative trends held for \texttt{gpt-5-mini}. \Cref{fig:cumavg_mini,fig:ci_mini} replicate the cumulative average and CI whisker plots from the main body for \texttt{gpt-5-mini}. 

\paragraph{Implementation note.} The \texttt{gpt-5-mini} runs used a \emph{pessimistic} simulation for the add-configuration check: within each batch, COUP simulates future observations at the arm's current LCB (a worst-case assumption) when deciding whether to add a new configuration to the active set. The current implementation instead uses the \emph{expected-mean} simulation, which assumes future observations equal the arm's empirical mean. Because the pessimistic simulation underestimates UCBs, the epsilon-prime condition triggers later, so new configurations enter the active set $1$--$2$ batches behind where the expected-mean schedule would place them. The confidence bounds and final rankings are unaffected---only the timing of configuration additions is shifted. See \Cref{app:proof} for the batching procedure.

\paragraph{COUP's advantage is larger on gpt-5-mini.} On Grid Reasoning, all COUP variants achieve a top-\num{10} cumulative average re-evaluated utility $\geq 0.99$ (cert~0.9: $0.991$, no~cert: $0.990$), while uniform sampling reaches only $0.951$---a gap of ${\sim}4$ percentage points, wider than the corresponding gap on \texttt{gpt-5.4-nano}. On DyVal, adaptive certification again leads ($1.0$ through $K=10$); no~cert and cert~0.9 are slightly lower ($0.989$ and $0.984$) but still well above uniform ($0.971$). Even at $K=50$, every COUP variant maintains a cumulative average above $0.978$ on both benchmarks, while uniform drops to $0.965$ on DyVal and $0.930$ on Grid Reasoning. On STEER-ME, all methods cluster near $0.88$--$0.90$ at $K=10$, consistent with the noise characteristics of 4-choice MCQA.

The CI whisker plots (\Cref{fig:ci_mini}) confirm valid Hoeffding coverage: re-evaluated means fall within the original confidence intervals throughout the top~\num{100}. As with \texttt{gpt-5.4-nano}, no~cert produces the tightest intervals at the top of the ranking while cert~0.9 produces tighter intervals further down, reflecting the same budget-reallocation tradeoff.

\paragraph{CWE and ER find different weak spots.} \Cref{fig:cwe_mini} replicates the CWE-vs-ER comparison on DyVal for \texttt{gpt-5-mini}. The same asymmetry holds: when scored by CWE, the CWE-optimized configurations dominate; when scored by error rate, the ER-optimized configurations dominate. The structural separation is nearly identical to \texttt{gpt-5.4-nano}: CWE overwhelmingly selects shallow templates (\qty{89}{\percent} at depth~2--3), ER selects deep templates (\qty{75}{\percent} at depth~7--10), and only 8 of the 95 re-evaluated configurations appear in both top sets.

\begin{figure}[h]
    \centering
    \includegraphics[width=\textwidth]{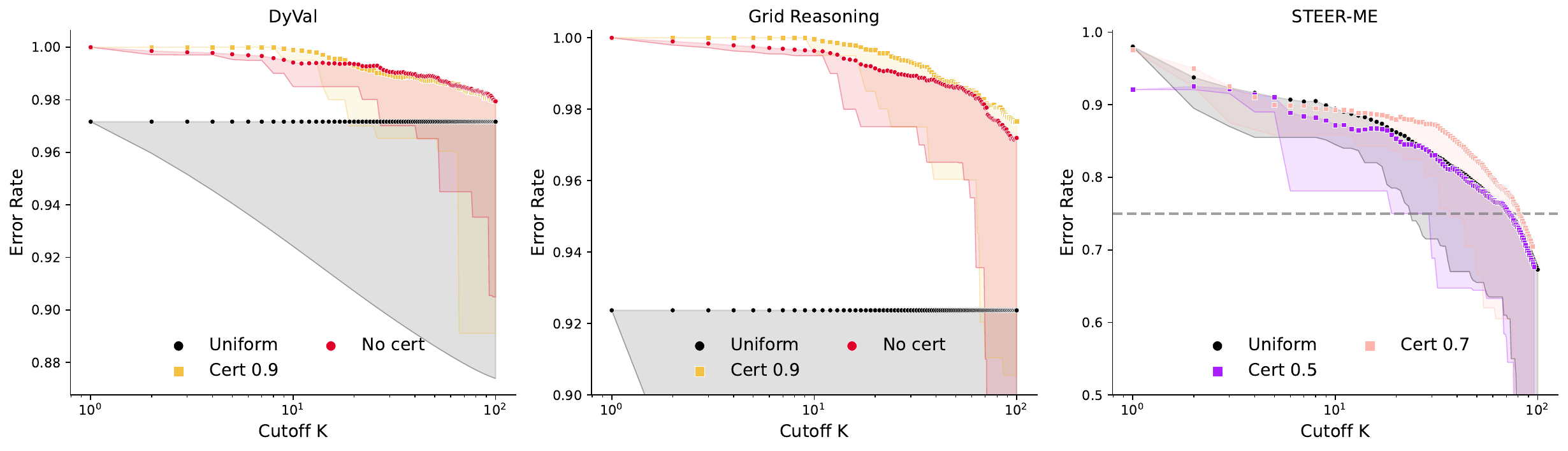}
    \caption{Cumulative average re-evaluation utility for gpt-5-mini. Same format as \Cref{fig:cumavg_reeval}. COUP variants consistently outperformed uniform sampling, and the running-minimum bands confirm that even the worst COUP-selected configurations tended to exceed uniform's average.}
    \label{fig:cumavg_mini}
\end{figure}

\begin{figure}[h]
    \centering
    \includegraphics[width=\textwidth]{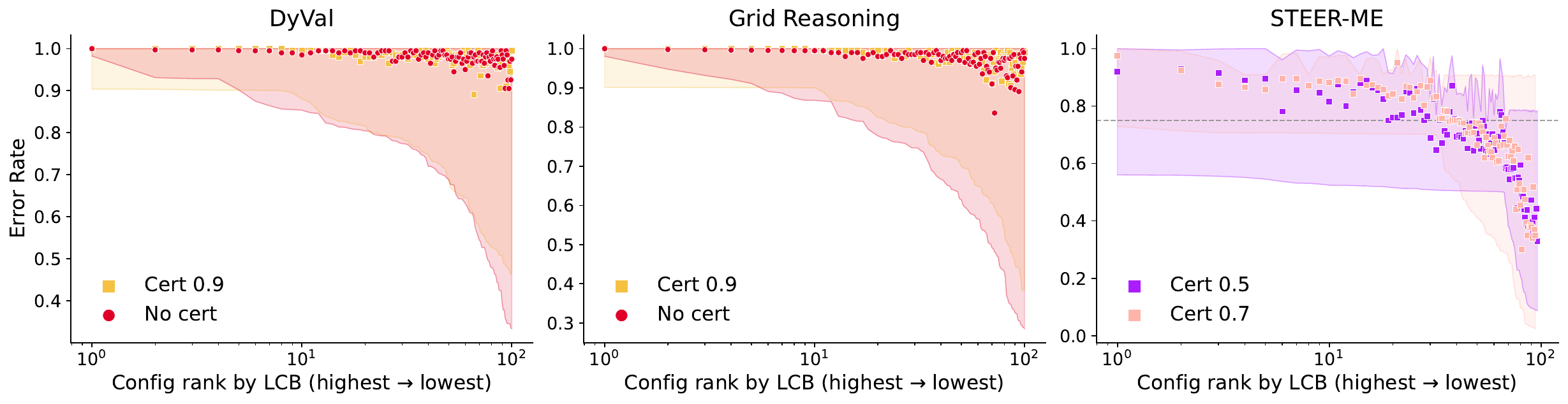}
    \caption{Confidence interval calibration for gpt-5-mini. Same format as \Cref{fig:ci_whiskers}. Re-evaluated means fell within the original intervals, and the certification-dependent width patterns matched those observed for gpt-5.4-nano.}
    \label{fig:ci_mini}
\end{figure}

\begin{figure}[h]
    \centering
    \includegraphics[width=0.9\textwidth]{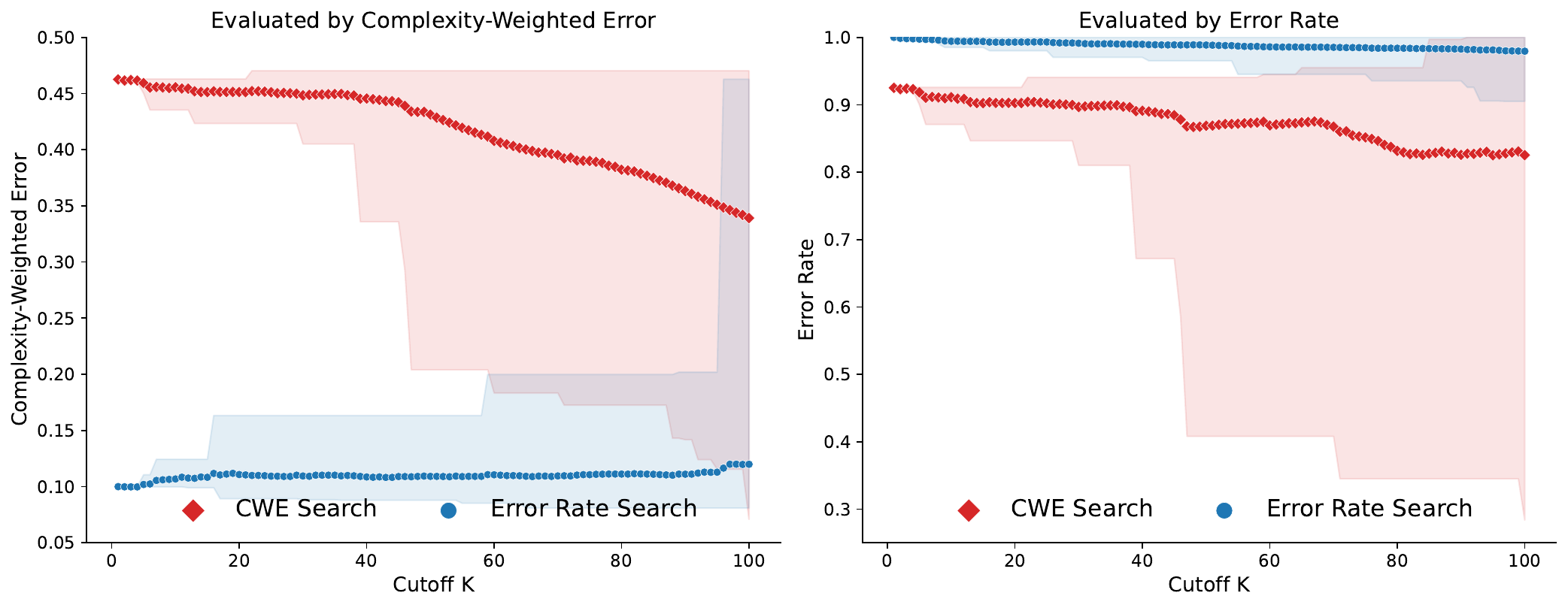}
    \caption{CWE vs.\ error-rate comparison on DyVal for gpt-5-mini. Same format as \Cref{fig:cwe_comparison}. The asymmetry between objectives replicates: each finds configurations the other misses.}
    \label{fig:cwe_mini}
\end{figure}
\end{document}